\newtheorem{theorem}{Theorem}
\newtheorem{remark}{Remark}
\newtheorem{lemma}{Lemma}
\newcommand{\norm}[1]{\left\lVert#1\right\rVert}
\title{On Polynomial Approximations for Privacy-Preserving and Verifiable ReLU Networks}
\author{%
Ramy E. Ali $^*$, Jinhyun So\thanks{Equal contribution.} \ and A. Salman Avestimehr  \\
  ECE Department \\
  University of Southern California (USC) \\
  \{\href{mailto:reali@usc.edu}{\texttt{reali}}, \href{mailto:jinhyuns@usc.edu}{\texttt{jinhyuns}}, \href{mailto:avestime@usc.edu}{\texttt{avestime}}\}@usc.edu
}
\begin{document}

\maketitle

\begin{abstract}
  Outsourcing deep neural networks (DNNs) inference tasks to an untrusted cloud raises data privacy and integrity concerns. While there are many techniques to ensure privacy and integrity for polynomial-based computations, DNNs involve non-polynomial computations.  To address these challenges, several privacy-preserving and verifiable inference techniques have been proposed based on replacing the non-polynomial activation functions such as the rectified linear unit (ReLU) function with polynomial activation functions. Such techniques usually require polynomials with integer coefficients or polynomials over finite fields. Motivated by such requirements, several works proposed replacing the ReLU function with the square function. In this work, we empirically show that the square function is not the best degree-$2$ polynomial that can replace the ReLU function even when restricting the polynomials to have integer coefficients. We instead propose a degree-$2$ polynomial activation function with a first order term and empirically show that it can lead to much better models. Our experiments on the CIFAR and Tiny ImageNet datasets on various architectures such as VGG-16 show that our proposed  function improves the test accuracy by up to $10.4\%$ compared to the square function.
\end{abstract}

\section{Introduction}
\label{Section: Introduction}

Offloading computationally-demanding learning and inference tasks to the cloud has become a necessity, but this presents several privacy and integrity risks \cite{Amazon-AWS-AI,Azure-Studio,Google-Cloud-AI}. Privacy-sensitive user's data such as medical images must not be revealed to the cloud, hence it should be first encrypted and the inference can be performed on the encrypted data. In addition, the cloud also may wish to keep its model confidential from the clients. Furthermore, since an untrusted or unreliable cloud  may return incorrect inference results, the user  must also be able to verify the correctness of the results. 

While there are many efficient privacy-preserving  \cite{rivest1978data,gentry2009fully} and verifiable computing techniques \cite{lund1992algebraic, bos2013improved} for polynomial-based computations in the literature,  neural networks involve non-polynomial functions such as the rectified linear unit (ReLU) activation function, the Sigmoid activation function and the max-pooling layers. Moreover, many verifiable computing techniques even require polynomials with \emph{integer coefficients} or polynomials over a \emph{finite field} \cite{lund1992algebraic,thalertime, ghodsi2017safetynets}.    

Several works \cite{xie2014crypto, gilad2016cryptonets, ghodsi2017safetynets, mohassel2017secureml, liu2017oblivious} address these challenges by replacing the non-polynomial functions in neural networks with  polynomials.  Then, the polynomial-based techniques for privacy-preserving and verifiable machine learning can be readily applied. Specifically, the ReLU function $\sigma_r(x)=\max(x,0)$ is usually replaced with the square activation function $\sigma_{\mathrm {square}}(x)=x^2$ and the max-pooling layers are usually replaced with sum-pooling layers. The rationale behind choosing the square function in particular as pointed out in \cite{gilad2016cryptonets} is that it is a lowest-degree non-linear polynomial function. In addition, some prior works on overparameterized polynomial networks suggested that neural networks with square activations are as expressive as networks with threshold activations \cite{livni2014computational, gautier2016globally}. Furthermore, several experiments illustrated that the square activation function based network yields an accuracy that is comparable with the corresponding ReLU networks. These experiments, however, were performed on simple datasets such as the MNIST dataset \cite{lecun1998mnist} with networks that have a small number of activation layers. 

While the square function works well for some experiments with a small number of layers and it is commonly used in many privacy-preserving and verifiable frameworks, it is not clear if it is the best function that can replace the ReLU activation function. In fact, the error resulting from approximating the activation functions by polynomials in deep neural networks (DNNs) grows with the number of layers \cite{petrushev2011rational, telgarsky2017neural}. Hence, experiments with deeper networks and more realistic datasets are necessary to better assess the accuracy of the square activation function.

\textbf{Contributions.} In this work, we empirically show that replacing the ReLU function with the square function in DNNs may result in a severe degradation in the accuracy. Indeed, we empirically illustrate that the square function is not the best second-degree polynomial that can replace the ReLU function even when dealing with polynomials over finite fields. Specifically, our contributions are as follows.
\begin{enumerate}[noitemsep,topsep=0pt,parsep=1pt,partopsep=2pt,leftmargin=*]
\item We study the problem of approximating the ReLU function with  polynomials with integer coefficients and  show that the ReLU function $\sigma_{\mathrm r}(x)$ cannot be uniformly approximated with a polynomial with integer coefficients on the interval $I=[-1, 1]$. \\ In contrast, we show that the scaled ReLU function $\sigma_{\mathrm{sr}}(x;c)=c \ \sigma_{\mathrm r}(x)$ can be uniformly approximated on the interval $I=[-1,1]$ when the constant $c$ is even. 
\item Motivated by this, we propose to replace the ReLU function in DNNs with a polynomial that uniformly approximates $\sigma_{\mathrm{sr}}(x; 2)=2 \ \sigma_{\mathrm{r}}(x)$. This results in the polynomial activation function  $\sigma_{\mathrm{poly}}(x)=x^2+x.$
For large intervals, we show that both the ReLU and the scaled ReLU functions are not uniformly approximable by polynomials with integer coefficients and we propose to uniformly approximate $\sigma_{\mathrm{sr}}(x;c)$ with a polynomial with real coefficients and round the resulting coefficients. When $I=[-a,a]$, our  activation function is given by $\sigma_{\mathrm{poly}}(x)=x^2+ax.$
\item We empirically show that our polynomial  function can lead to better models.  Our experiments on the CIFAR-$10$, CIFAR-$100$, Tiny-ImageNet-$10$ and Tiny-ImageNet-$200$ datasets show significant accuracy improvement compared to the square activation function. 

Specifically, our experiments on the DNN considered in \cite{liu2017oblivious} show that our polynomial activation function improves the test accuracy by  $5.6\%$ on CIFAR-$10$ and by $4.0\%$ on CIFAR-$100$ compared to the square function. Moreover, on the \enquote{Network In Network (NIN)}  architecture of \cite{lin2013network} show that our polynomial  function improves the test accuracy by $7.7\%$ on CIFAR-$10$ and by $9.4 \%$ on CIFAR-$100$ compared to the square function.

In addition, our polynomial  function improves the test accuracy by $5.8 \%$ on LeNet \cite{lecun1998mnist} and Tiny-ImageNet-$10$, and by $10.4 \%$ on VGG-16 \cite{simonyan2014very} and Tiny-ImageNet-$200$.

\end{enumerate}


\section{Related Work}
\label{Section:Related Work}

Numerous works considered privacy-preserving and verifiable inference for deep ReLU neural networks which can benefit from a better polynomial activation function than the square activation function and that is our goal in this work. In this section, we briefly review the closely-related works.

A straightforward approach to deal with the non-polynomial functions while keeping the user's data private is to use an interactive approach such that the user performs these non-polynomial computations as proposed in \cite{barni2006privacy}. This interactive approach, however, incurs significant communication and computation costs. More importantly, this approach leaks information about the cloud's model. In order to avoid such costs, CryptoNets  \cite{gilad2016cryptonets} proposed a privacy-preserving inference technique for DNNs that keeps the user's data and also the cloud's model confidential. Such neural networks are known as oblivious neural networks (ONNs). Specifically, CryptoNets uses leveled homomorphic encryption techniques \cite{bos2013improved}, replaces the ReLU function with the square function and  max-pooling layers with sum-pooling layers. The empirical evaluation of CryptoNets resulted in a training accuracy of $99 \%$ on the MNIST dataset using a $5$-layer network, with only two square activation layers.

Several privacy-preserving and verifiable inference frameworks also focused on reducing the latency of the computations such as \cite{liu2017oblivious, sanyal2018tapas, chou2018faster, brutzkus2019low, mishra2020delphi, ghodsi2020cryptonas}. For instance, MiniONN \cite{liu2017oblivious} has considered the privacy-preserving inference problem for neural networks while requiring no changes to the training phase. Specifically, the goal of MiniONN is to transform an already trained neural network  to an ONN without changing the training phase. Unlike CryptoNets which does not leak any information about the cloud's model, MiniONN reveals the architecture of the cloud's neural network in terms of the number of layers, number of nodes in each layer and the operations used in each layer.


Another line of work also focused on the integrity issue of the inference problem  \cite{ghodsi2017safetynets, chen2018securenets, zhao2021veriml}. In particular, SafetyNets \cite{ghodsi2017safetynets} proposed a verifiable inference approach for neural networks that can be represented as arithmetic circuits based on the sum-check protocol \cite{lund1992algebraic, thalertime, goldwasser2015delegating}. Since such techniques require \emph{polynomials over a finite field}, SafetyNets also  replaces the ReLU function with the square function and the max-pooling layers with sum-pooling layers. The square activation function was shown to work well in a few experiments with three-layer and four-layer neural networks on the simple MNIST, the MNIST-Back-Rand and the TIMIT speech recognition datasets \cite{garofolo1993timit}.

The closest-work to our work is CryptoDL \cite{hesamifard2017cryptodl}, which considered the problem of designing better polynomial approximations of the ReLU function \emph{over reals}. Our work, however, is different from CryptoDL in the following aspects.
\begin{enumerate}[noitemsep,topsep=0pt,parsep=1pt,partopsep=2pt,leftmargin=*]
    \item  We consider \emph{polynomials over finite fields} which is necessary for some works as SafetyNets \cite{ghodsi2017safetynets}. This is a novel aspect of our work which, to the best of our knowldege, has not been considered before beyond the square activation function.  Specifically, CryptoDL has only developed polynomials over the reals and hence it is not a valid baseline for our setting.
    \item The ReLU function in CryptoDL is approximated using a degree-$3$ polynomial. Specifically, the Sigmoid function is first approximated with a degree-$2$ polynomial. This degree-$2$ polynomial is then integrated to get a degree-$3$ polynomial that approximates the ReLU  function. Instead, we focus on polynomial approximations of degree-$2$ for a fair comparison with the square  function. 
    \item  Finally, when comparing between the different activation functions, \cite{hesamifard2017cryptodl} changes the original network by adding more layers to get closer to the performance of the baseline ReLU network. In contrast, we do not change the network architecture while comparing between the different activation functions. This is an important feature of our work that allows for using the same baseline architecture without searching for a new architecture that is compatible with the polynomial functions. In addition, adding more layers complicates the training and the inference further. 
\end{enumerate}


\section{Polynomial Approximations of the ReLU Function}
\label{Section:Polynomial Activation Functions}
In this section, we discuss the feasibility of uniformly approximating the ReLU function with a polynomial with integer coefficients and discuss the minimax polynomial approximation approach.

\subsection{Can We Uniformly Approximate the ReLU  with a Polynomial with Integer Coefficients?}

Since our goal is to replace the ReLU function with a polynomial with integer coefficients or a polynomial over finite field, we start by discussing the feasibility of doing so.

\noindent A function $f$ can be uniformly approximated over an interval $I$ with a polynomial with integer coefficients if for any $\epsilon>0$, there exists a polynomial with integer coefficients $q$ such that 
\begin{align}
|f(x)-q(x)|<\epsilon, \forall x \in I.
\end{align}

Uniform polynomial approximation when restricting the polynomial to have integer coefficients, however,  is not possible when the interval $I$ is of length four or more \cite{ferguson2006can}. We now recall this result.
\begin{lemma}
\label{lemma:interval}
If the interval $I$ is of length four or more, then the only functions that can be uniformly approximated by polynomials with integer coefficient are those polynomials themselves.
\end{lemma}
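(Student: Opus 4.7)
The plan is to reduce the statement to a lower bound on the sup-norm of non-zero integer-coefficient polynomials on $I$. First I would assume $f$ is a uniform limit on $I$ of a sequence $(p_n)$ of polynomials with integer coefficients. Then $(p_n)$ is Cauchy in the sup-norm, so for all sufficiently large $n,m$ we have $\|p_n - p_m\|_{L^\infty(I)} < 1$. Since $p_n - p_m$ itself has integer coefficients, the whole result would follow from the following key claim: \emph{on any interval $I$ of length at least $4$, every non-zero polynomial with integer coefficients has sup-norm at least $1$.}

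For the key claim I would invoke the classical Chebyshev minimax theorem: among all monic polynomials of degree $n \geq 1$ on an interval of length $L$, the minimum sup-norm is $2(L/4)^n$, attained by an affine rescaling of the $n$-th Chebyshev polynomial $T_n$. If $p$ has integer coefficients, degree $n \geq 1$, and leading coefficient $a_n \in \mathbb{Z} \setminus \{0\}$, then $p/a_n$ is monic of degree $n$, so
\[
\|p\|_{L^\infty(I)} \;=\; |a_n|\,\|p/a_n\|_{L^\infty(I)} \;\geq\; 2|a_n|\bigl(|I|/4\bigr)^n \;\geq\; 2,
\]
using $|a_n| \geq 1$ and $|I| \geq 4$. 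The degree-zero case is immediate: a non-zero integer constant has absolute value at least $1$. Hence $\|p\|_{L^\infty(I)} \geq 1$ in all cases.

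Combining this with the Cauchy estimate forces $p_n - p_m \equiv 0$ for all large $n,m$, so the sequence is eventually constant, equal to a fixed polynomial $p^\star$ with integer coefficients. Since $p_n \to f$ uniformly, this gives $f = p^\star$, proving the lemma.

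The main obstacle is the Chebyshev-based lower bound, which I would treat as a black box from classical approximation theory rather than re-derive. The rest is a short Cauchy-sequence argument combined with the elementary observation that the difference of two integer-coefficient polynomials is again integer-coefficient. The length threshold of $4$ enters precisely because the minimax value $2(L/4)^n$ is monotone in $L$ and ceases to give a useful uniform lower bound once $L < 4$, which is exactly the regime where approximation by integer-coefficient polynomials becomes non-trivial.
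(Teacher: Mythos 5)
Your proof is correct. The paper does not actually prove this lemma---it simply recalls it from the cited reference \cite{ferguson2006can}---and your argument is the standard one behind that classical fact: the Chebyshev minimax bound $2\left(|I|/4\right)^n \geq 2$ shows every non-zero integer-coefficient polynomial of positive degree has sup-norm at least $1$ on an interval of length at least $4$, so the uniformly Cauchy sequence of approximants must be eventually constant, forcing $f$ to be one of those polynomials itself.
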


We next focus on approximations over smaller intervals. Based on Lemma \ref{lemma:interval}, the only functions that can be uniformly approximated by polynomials with integer coefficient are those polynomials themselves on $I=[-2, 2]$. The natural question that we ask next then is whether we can approximate a real-valued continuous function $f$ with a polynomial of integer coefficients on the interval $I=[-1, 1]$. It turns out that this is possible if and only if two conditions are satisfied as provided in Lemma \ref{lemma:conditions} \cite{ferguson2006can}. 
\begin{lemma}
\label{lemma:conditions}
For a continuous real-valued function $f$ on the interval $I=[-1, 1]$ to be uniformly approximable by polynomials with integer coefficients it is necessary and sufficient that 
\begin{enumerate}[label=(\roman*)]
    \item $f$ is integer-valued at $-1$, $0$, and $1$, and 
    \item the integers $f(-1)$ and $f(1)$ have the same parity.
\end{enumerate}
\end{lemma}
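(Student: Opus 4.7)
The plan is to handle the two directions separately. Necessity is evaluation-based; sufficiency decomposes into an explicit Lagrange interpolation step followed by a Weierstrass-type approximation argument with integer rounding.

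For necessity, suppose $p_n \in \mathbb{Z}[x]$ converge uniformly to $f$ on $[-1,1]$. Evaluating at $a \in \{-1, 0, 1\}$ yields integer sequences $\{p_n(a)\}$ converging to $f(a)$ in $\mathbb{R}$; since $\mathbb{Z}$ is discrete, each such sequence is eventually constant, forcing $f(-1), f(0), f(1) \in \mathbb{Z}$. For the parity condition, observe that any $p(x) = \sum_i c_i x^i \in \mathbb{Z}[x]$ satisfies $p(1) - p(-1) = 2 \sum_{i \text{ odd}} c_i \in 2\mathbb{Z}$, and applying the same closedness argument to the integer sequence $p_n(1) - p_n(-1)$ gives $f(1) - f(-1) \in 2\mathbb{Z}$.

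For sufficiency, I would first absorb the values at $\{-1,0,1\}$ using the explicit quadratic Lagrange interpolant
$$p_0(x) = \left(\frac{f(1) + f(-1)}{2} - f(0)\right) x^2 + \frac{f(1) - f(-1)}{2}\, x + f(0),$$
whose coefficients are all integers: condition (i) makes $f(0) \in \mathbb{Z}$, and condition (ii) makes both $(f(1) \pm f(-1))/2$ integers (same parity implies both sum and difference are even). Setting $g := f - p_0$, the problem reduces to uniformly approximating a continuous function vanishing at $\{-1,0,1\}$ by polynomials in $\mathbb{Z}[x]$. Since any $p \in \mathbb{Z}[x]$ vanishing at the three integer points $\{-1, 0, 1\}$ factors as $p(x) = x(x^2-1)\, q(x)$ with $q \in \mathbb{Z}[x]$ (iterated polynomial division by the monic factors $x,\, x+1,\, x-1$ stays inside $\mathbb{Z}[x]$), the task becomes to find $q \in \mathbb{Z}[x]$ with $\| g - x(x^2-1)\, q \|_\infty$ arbitrarily small.

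The remaining step is the crux. Using the Weierstrass theorem, choose a real polynomial $r$ with $\|g - r\|_\infty < \epsilon/2$; subtract its (small) Lagrange interpolant at $\{-1,0,1\}$ to assume $r(\pm 1) = r(0) = 0$, and factor $r(x) = x(x^2-1)\, s(x)$ with $s \in \mathbb{R}[x]$. We then need $q \in \mathbb{Z}[x]$ with $\|x(x^2-1)(s-q)\|_\infty$ small. The main obstacle is that a naive round-to-nearest on each coefficient of $s$ yields an error that can grow with $\deg s$ in the uniform norm. The standard resolution exploits the fact that integer polynomials of the form $x(1-x^2)^j$ are uniformly small on $[-1,1]$, with $\|x(1-x^2)^j\|_\infty = O(j^{-1/2})$, so they serve as arbitrarily small ``bumps''; one rounds the coefficients of $s$ top-down, at each stage absorbing the fractional residual by adding a high-$j$ bump with an appropriately chosen integer multiplier. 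Making the iterative bookkeeping rigorous so that the accumulated uniform error stays below $\epsilon$ while all coefficients end up integers is the technical heart of the argument, and is the classical P\'al/Fekete-type construction that underlies this lemma.
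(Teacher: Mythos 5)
First, note that the paper does not actually prove this lemma: it is recalled verbatim from \cite{ferguson2006can} and used as a black box, so there is no in-paper argument to compare against. Judged on its own terms, your necessity direction is complete and correct (discreteness of $\mathbb{Z}$ forces $f(-1),f(0),f(1)\in\mathbb{Z}$, and $p(1)-p(-1)=2\sum_{i\ \mathrm{odd}}c_i$ forces the parity condition), and your reduction via the integer Lagrange interpolant $p_0$ is also correct --- indeed it is essentially the same computation the paper performs later when it derives $q(x)=\tfrac{c}{2}(x^2+x)$ in its Theorem~1.

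The gap is in the sufficiency direction, and it sits exactly where you say it does, but it is larger than ``bookkeeping.'' The density of $\{x(x^2-1)q(x): q\in\mathbb{Z}[x]\}$ in the continuous functions vanishing at $\{-1,0,1\}$ \emph{is} the theorem; everything before it is routine. Two concrete problems with the sketch as written. (1) The bump family $x(1-x^2)^j$ consists entirely of odd polynomials, so integer combinations of these bumps can only absorb residuals in the odd part of $s$; to round a general coefficient of $x^m$ you need the larger family $x^{m}(1-x^2)^j$ (whose sup norms are $O(j^{-m/2})$ for fixed $m$), or a genuinely different mechanism such as the Bernstein-polynomial rounding argument (replace $f(k/n)\binom{n}{k}$ by its integer part and control $\sum_k x^k(1-x)^{n-k}$ away from the endpoints). (2) Even with the right bumps, ``round top-down and absorb the fractional residual'' is not obviously convergent: each absorption step perturbs lower-order coefficients, and one must show the accumulated sup-norm error over $\deg s$ stages can be made less than $\epsilon$ uniformly in $\deg s$, which is precisely the nontrivial estimate. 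Since you explicitly defer this step rather than execute it, the proof of sufficiency is incomplete; the honest fix is either to carry out the Bernstein/P\'al construction in full or to do what the paper does and cite \cite{ferguson2006can} for this direction.
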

Next, we show that the ReLU function cannot be uniformly approximated by a polynomial with integer coefficients as it does not satisfy the conditions of Lemma \ref{lemma:conditions}. However, scaling the ReLU function with an even number $c$ leads to a function that is uniformly approximable by polynomials with integer coefficients. 
\begin{theorem}(Uniform Approximation with Integer Coefficients of the ReLU Function)
\label{thm:approximation}
\begin{itemize}
\item The ReLU function $\sigma_{\mathrm r}(x)=\max(x, 0)$ is not uniformly approximable by polynomials with integer coefficients on the interval $I=[-1, 1]$. 
\item The scaled ReLU function $\sigma_{\mathrm{sr}}(x;c)=\max(cx, 0)$, where $c$ is an even  number, is uniformly approximable by polynomials with integer coefficients on the interval $I=[-1, 1]$. Moreover, the degree-$2$ interpolating polynomial is given by 
\begin{align}
q(x)=\frac{c}{2} \left(x^2+x\right).
\end{align}
\end{itemize}
\end{theorem}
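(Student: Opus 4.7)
The plan is to derive both statements directly from Lemma~\ref{lemma:conditions}, then compute the interpolating polynomial by solving a small linear system at the three required nodes $\{-1, 0, 1\}$.

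For the first bullet, I would evaluate the ReLU at the three interpolation nodes: $\sigma_{\mathrm r}(-1) = 0$, $\sigma_{\mathrm r}(0) = 0$, and $\sigma_{\mathrm r}(1) = 1$. All three are integers, so condition (i) of Lemma~\ref{lemma:conditions} holds. However, $\sigma_{\mathrm r}(-1) = 0$ is even while $\sigma_{\mathrm r}(1) = 1$ is odd, so the parities disagree and condition (ii) fails. Since both conditions are necessary for uniform approximability, the ReLU cannot be uniformly approximated by polynomials with integer coefficients on $[-1,1]$.

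For the second bullet, I would first verify both hypotheses of Lemma~\ref{lemma:conditions} for $\sigma_{\mathrm{sr}}(x;c) = \max(cx, 0)$ when $c$ is an even positive integer. The values $\sigma_{\mathrm{sr}}(-1;c) = 0$, $\sigma_{\mathrm{sr}}(0;c) = 0$, and $\sigma_{\mathrm{sr}}(1;c) = c$ are all integers, giving (i); and since $c$ is even, both $0$ and $c$ have the same (even) parity, giving (ii). By the sufficiency part of Lemma~\ref{lemma:conditions}, $\sigma_{\mathrm{sr}}(x;c)$ is therefore uniformly approximable by polynomials with integer coefficients on $[-1,1]$.

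To exhibit the interpolating polynomial, I would write $q(x) = \alpha x^2 + \beta x + \gamma$ and impose the three interpolation conditions at the nodes $-1, 0, 1$: $q(0) = 0$ gives $\gamma = 0$; $q(-1) = 0$ gives $\alpha - \beta = 0$, so $\alpha = \beta$; and $q(1) = c$ gives $\alpha + \beta = c$, hence $\alpha = \beta = c/2$. Thus $q(x) = \tfrac{c}{2}(x^2 + x)$, and since $c$ is even, both coefficients are integers, as required. The only subtlety to flag is that $q$ is the interpolating polynomial at the three distinguished nodes guaranteed by Lemma~\ref{lemma:conditions}, not itself a uniform approximant; the actual approximation is furnished by the sufficiency direction of that lemma. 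No serious obstacle is expected, since both parts are short consequences of Lemma~\ref{lemma:conditions} once the node values and their parities are read off.
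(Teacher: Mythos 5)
Your proposal is correct and follows essentially the same route as the paper: both parts are obtained by checking the two conditions of Lemma~\ref{lemma:conditions} at the nodes $-1,0,1$, and the interpolating polynomial $q(x)=\tfrac{c}{2}(x^2+x)$ is recovered (the paper via the explicit quadratic interpolation formula, you via the equivalent $3\times 3$ linear system). Your closing caveat that $q$ is the interpolant at the distinguished nodes rather than the uniform approximant itself is a correct and worthwhile clarification.
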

We provide the proof of Theorem \ref{thm:approximation} in Appendix \ref{appendix:proof}.
 
Since it is impossible to uniformly approximate the ReLU function with a polynomial with integer coefficients even on $I=[-1, 1]$, we instead propose to approximate the scaled ReLU function on this interval. For instance, for $c=2$, this results in the polynomial activation function
\begin{align}
\sigma_{\mathrm{poly}}(x)=x^2+x,
\end{align}
which is shown in Fig. \ref{fig:poly-scaled-relu}. 
\begin{figure}[htb!]
    \centering
    \includegraphics[scale=0.45]{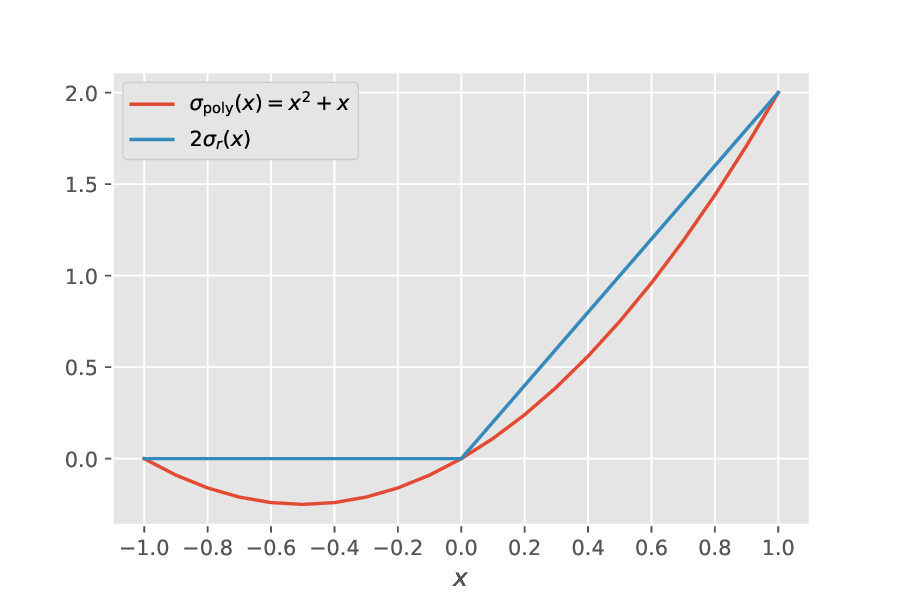}
    \caption{\footnotesize Our function $\sigma_{\mathrm{poly}}(x)=x^2+x$ is shown versus the scaled ReLU function $\sigma_{\mathrm{sr}}(x; 2)=2 \sigma_{\mathrm r}(x)$.}
    \label{fig:poly-scaled-relu}
\end{figure}

We next consider larger intervals than the interval $[-1,1]$. We first recall the following result \cite{ferguson2006can}. 
\begin{lemma}
\label{lemma:other-intervals-less-than-4}
A continuous function $f$ on an interval $I$ of length strictly less than four is uniformly approximable by polynomials with integer coefficients if and only if its interpolating polynomial on $J(I)$ has integer coefficients, where $J(I)$ denotes the algebraic kernel of $I$\footnote{We refer the reader to \cite{ferguson2006can} for the definition of the algebraic kernel and some illustrating examples.}.  
\end{lemma}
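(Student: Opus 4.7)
The plan is to prove the \enquote{only if} and \enquote{if} directions separately, with the algebraic-integer nature of $J(I)$ as the common thread. Throughout, I will use that when $|I|<4$ the algebraic kernel $J(I)$ is a finite set of points (a standard consequence of Kronecker--Schur type bounds on totally real algebraic integers confined to a short interval), so speaking of \emph{the} interpolating polynomial on $J(I)$ is unambiguous.

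For the \enquote{only if} direction, I would suppose $f$ is the uniform limit on $I$ of a sequence $p_n \in \mathbb{Z}[x]$. Fix an $\alpha \in J(I)$ with conjugates $\alpha_1,\ldots,\alpha_d$, all of which lie in $I$ by definition of the algebraic kernel. Since $p_n \in \mathbb{Z}[x]$, the tuple $(p_n(\alpha_1),\ldots,p_n(\alpha_d))$ is a complete set of Galois conjugates of the algebraic integer $p_n(\alpha)$, so each elementary symmetric function of those values is an ordinary rational integer. Uniform convergence of $p_n$ to $f$ on $I$ forces pointwise convergence at each $\alpha_i$, hence the corresponding symmetric expressions in $f(\alpha_i)$ are limits of integers and must themselves be integers. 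Collecting these constraints over all $\alpha \in J(I)$ and interpolating, one recovers that the unique Lagrange interpolant through $J(I)$ matching $f$ has integer coefficients.

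For the \enquote{if} direction, let $p\in\mathbb{Z}[x]$ be the interpolant of $f$ on $J(I)$ and set $g:=f-p$, a continuous function vanishing on $J(I)$. It then suffices to approximate $g$ uniformly on $I$ by elements of $\mathbb{Z}[x]$. The key technical input is the existence, for every $\epsilon>0$, of a nonzero auxiliary polynomial $Q_\epsilon \in \mathbb{Z}[x]$ with $\norm{Q_\epsilon}_I < \epsilon$; such $Q_\epsilon$ exist precisely when the integer transfinite diameter of $I$ is strictly less than one, which is the case exactly when $|I|<4$ (the prototype being rescaled Chebyshev polynomials on $[-2,2]$, which are monic with sup-norm $2$). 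Any such $Q_\epsilon$ must vanish on all of $J(I)$ by the same product-of-conjugates argument used in necessity. I would then approximate $g$ by a real-coefficient polynomial $r$ via Weierstrass, round each coefficient of $r$ to the nearest integer, and absorb the resulting rounding error by multiplying it against a sufficiently high power of $Q_\epsilon$; the final object lies in $\mathbb{Z}[x]$, still vanishes on $J(I)$, and can be made uniformly within $O(\epsilon)$ of $g$ on $I$.

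The hard part will be the sufficiency direction, and specifically the construction and exploitation of these small auxiliary integer polynomials $Q_\epsilon$. This is exactly where the length threshold of four enters: for $|I|\geq 4$, no nonconstant $Q\in\mathbb{Z}[x]$ can be made small on $I$ at all (by Lemma \ref{lemma:interval}), so the scheme collapses. Making the \enquote{round-and-correct} step quantitative --- choosing the power $N$ of $Q_\epsilon$ large enough to dominate the coefficient-rounding error uniformly while simultaneously controlling the degree and sup-norm of the resulting polynomial on $I$, and verifying that the vanishing on $J(I)$ survives all of this bookkeeping --- is the delicate combinatorial-analytic core on which the argument hinges.
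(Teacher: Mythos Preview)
The paper does not actually prove this lemma: it is stated without proof and attributed to \cite{ferguson2006can} (the surrounding text reads ``we recall the following result''). So there is no in-paper argument to compare against.

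For what it is worth, your sketch follows the standard line of the classical proof (as in Ferguson and the older sources he surveys): necessity via the Galois-stability of $J(I)$ and integrality of symmetric functions of $p_n(\alpha_i)$, and sufficiency via the existence of small nonzero integer polynomials on short intervals (transfinite-diameter $<1$ exactly when $|I|<4$) together with a round-and-correct construction. The one place I would tighten is the passage from ``the symmetric functions of the $f(\alpha_i)$ are integers'' to ``the Lagrange interpolant on $J(I)$ has integer coefficients'': a cleaner route is to note that for each $p_n\in\mathbb{Z}[x]$ its interpolant on $J(I)$ already lies in $\mathbb{Z}[x]$ (reduce $p_n$ modulo the monic integer polynomial $\prod_{\alpha\in J(I)}(x-\alpha)$), and then use that these interpolants converge coefficientwise to the interpolant of $f$. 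Likewise, in the sufficiency step, the usual treatment multiplies the Weierstrass approximant by $Q_\epsilon$ \emph{before} rounding (so that the rounding residue is itself a multiple of $Q_\epsilon$ and hence uniformly small), rather than rounding first and then trying to absorb the error into a power of $Q_\epsilon$; your phrasing of that step is slightly off and would benefit from being reorganized along those lines.
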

For instance, the algebraic kernel of the interval $I=[-\sqrt{2}, \sqrt{2}]$ is given by \cite{ferguson2006can}
\begin{align}
J([-\sqrt{2}, \sqrt{2}])=\{0, \pm 1, \pm \sqrt{2}\}.
\end{align}
Since the ReLU and the scaled ReLU functions do not satisfy the condition of Lemma \ref{lemma:other-intervals-less-than-4} on $I=[-\sqrt{2}, \sqrt{2}]$, then it follows that they are not uniformly approximable on that interval. More generally, the algebraic kernel of any sub-interval $I=[-\alpha, \alpha]$, where $\alpha \leq 1.563$,  contains $0$ and whichever of $\pm 1$ and $\pm \sqrt{2}$ that are in $I$ \cite{ferguson2006can}. Hence, it follows from Lemma \ref{lemma:other-intervals-less-than-4} that the ReLU and the scaled ReLU functions are not uniformly approximable on any such interval with $\sqrt{2} \leq \alpha \leq 1.563$.

Given the impossibility of uniformly approximating the ReLU function and the scaled ReLU function over large intervals, we propose to approximate the ReLU function over reals and scale and round the resulting coefficients as we discuss in the next subsection. 
\subsection{Minimax Approximation}
\label{Subsection: Minimax}
In this subsection, we discuss approximating the ReLU function through the minimax approximation technique over reals. In this approach, the goal  is to approximate a function $f(x)$ over an interval $I=[a, b]$ through a polynomial $p_n(x)$ of degree at most $n$ that minimizes 
\begin{align}
E &\triangleq \norm{f-p_n}_{\infty}\notag  \\ &= \max\limits_{a \leq x \leq b} |f(x)-p_n(x)|.
\end{align}

Chebyshev showed that if $f(x)$ is a continuous function in $[a, b]$, then the polynomial $p^*_n(x)$ is a minimax polynomial of degree at most $n$ if and only if $\exists n+2$ points $a \leq x_0, x_1, \cdots, x_{n+1} \leq b$ such that  
\begin{align}
f(x_i)-p^*_n(x_i)=(-1)^i E^*.
\end{align}
That is, the error function has the same magnitude at these points with alternating signs. This is known as the equioscillation theorem \cite{carothers1998short}. The Remez algorithm \cite{veidinger1960numerical} is an efficient algorithm which solves for the coefficients of the minimax polynomial. We use the minimax approximation approach to approximate the ReLU function in the interval $I=[-a,a]$ and we get the following polynomial
\begin{align}
p_2(x) =\frac{1}{2a} x^2+\frac{1}{2}x+\frac{a}{16}.
\end{align}

\noindent This polynomial, however, has real-valued coefficients and our goal is to construct polynomials over $\mathbb{F}_p$. In order to do so, we scale this polynomial as follows   
\begin{align}
\sigma_{\mathrm{M} }(x) &= x^2+ax+\frac{a^2}{8}. 
\end{align}
Hence, for an integer $a$ and aside from the bias term, this suggests the polynomial activation function
\begin{align}
\label{eq:activ_poly}
\sigma_{\mathrm{poly}}(x)=x^2+ax.
\end{align}
We now discuss some important remarks. 
\begin{remark}\normalfont(Zero Constant Term).
We observe that our activation function has a zero constant term. In fact, many prior works as \cite{le1991eigenvalues, lecun2012efficient} illustrated that pushing the mean activations to zero decreases the bias shift effect and speeds up the learning. This motivated the development of several activation functions such as leaky ReLUs (LReLUS) \cite{maas2013rectifier}, parametric ReLUS (PReLUs) \cite{he2015delving} and exponential linear units (ELUs) \cite{clevert2015fast}.
\end{remark}
\begin{remark}\normalfont(Bounded Interval). The assumption that the interval $I$ is bounded is a typical assumption in approximation theory and in prior works that apply approximation theory in deep learning as \cite{telgarsky2017neural,boulle2020rational}. The normalization layers can help in ensuring so, although this is not strictly ensured. 
\end{remark}

\begin{remark}\normalfont(Trainable Activation Function).
We have derived a polynomial activation function based on our theoretical results in  Theorem \ref{thm:approximation} rather than having a trainable activation function for various reasons. First, the polynomial must have integer coefficients or more precisely the polynomial must be over a finite field which complicates the training. Second, our polynomial function already achieves substantial accuracy gains compared to the square activation function as shown in our experiments, and also avoids us the extra computation cost of having trainable activation functions. 
\end{remark}
\begin{remark}\normalfont(GELU Activation Function).
 While our proposed activation function may look similar to the Gaussian error linear unit (GELU)  activation function \cite{hendrycks2016gaussian}, it is worth noting that the GELU function is not a polynomial as it involves the cumulative distribution function (CDF) of Gaussian random variables. Specifically, the GELU is specified in terms of the error function. Hence, we cannot just use the GELU activation function as we require polynomials with integer coefficients. 
\end{remark}

\section{Empirical Evaluation}
\label{Section:Evaluation}
In this section, we compare between the various activation functions. To better assess the performance of such activation functions and to show the limitations of the square activation function, we consider several networks with a large number of activation layers compared to the prior works. A common problem in our work and the prior works is that the finite field size can be quite large due to using polynomial activations instead of ReLU activations and sum-pooling layers instead of max-pooling layers \cite{ghodsi2017safetynets}. This prevented us from considering more complicated networks than the networks considered in this section and in Appendix \ref{app:experiments}. 

We consider image classification on the CIFAR-$10$, CIFAR-$100$, Tiny-ImageNet-$10$ and Tiny-ImageNet-$200$ datasets. The CIFAR datasets have RGB images of size $3 \times 32 \times 32$ of everyday objects classified into $10$ and $100$ classes, respectively. In CIFAR-$10$, the training set has $50000$ images while the test set has $10000$ images. CIFAR-$100$ has $100$ classes, each contains $600$ images with $500$ training images and $100$ test images. The Tiny-ImageNet-$200$ dataset  has RGB images of size $3 \times 64 \times 64$ classified into $200$  classes,  a training dataset of $100,000$ images, a validation dataset of $10,000$
images, and a test dataset of $10,000$ images. The Tiny-ImageNet-$10$ dataset contains the first $10$ classes out of the $200$ classes. Each class has $500$ images, which are split into a training, validation, and test set with a ratio of $8:1:1$. We normalize the images with the mean and the standard deviation of the pixels of each RGB channel as in \cite{krizhevsky2012imagenet}. Specifically, the mean and standard  deviation of the RGB channels are $[0.485, 0.456, 0.406]$ and $[0.229, 0.224, 0.225]$, respectively.

\subsection{CNN of \cite{liu2017oblivious}}
 \begin{figure*}[htb!]
    \begin{minipage}{0.47\textwidth}
        \centering
        \includegraphics[scale=0.14]{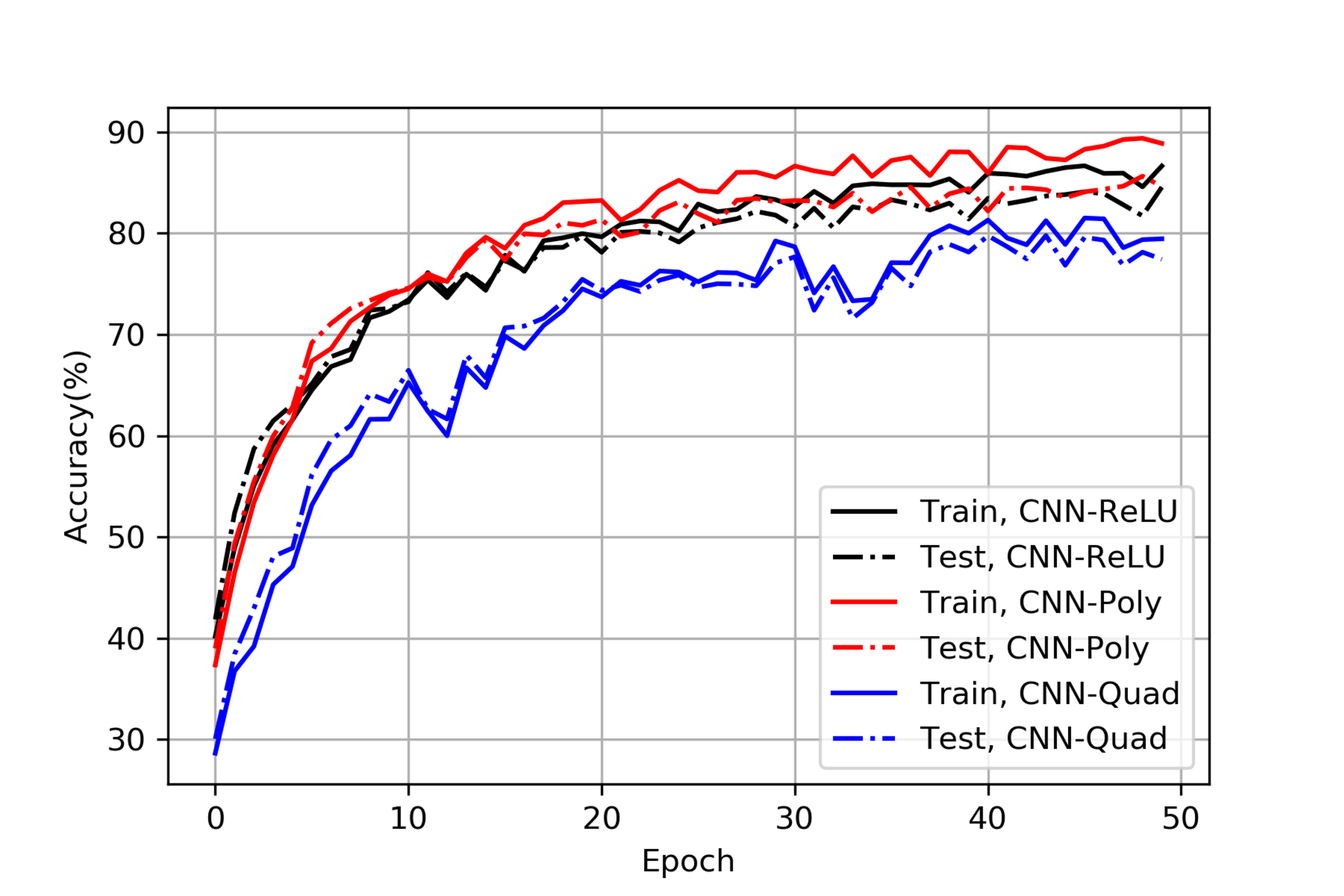}
         \caption{\footnotesize Accuracy of the CNN architecture in ~\cite{liu2017oblivious} on the  CIFAR-$10$ dataset.\label{fig:CNN_CIFAR10} }
    \end{minipage}\hfill
    \begin{minipage}{0.47\textwidth}
        \centering
        \includegraphics[scale=0.14]{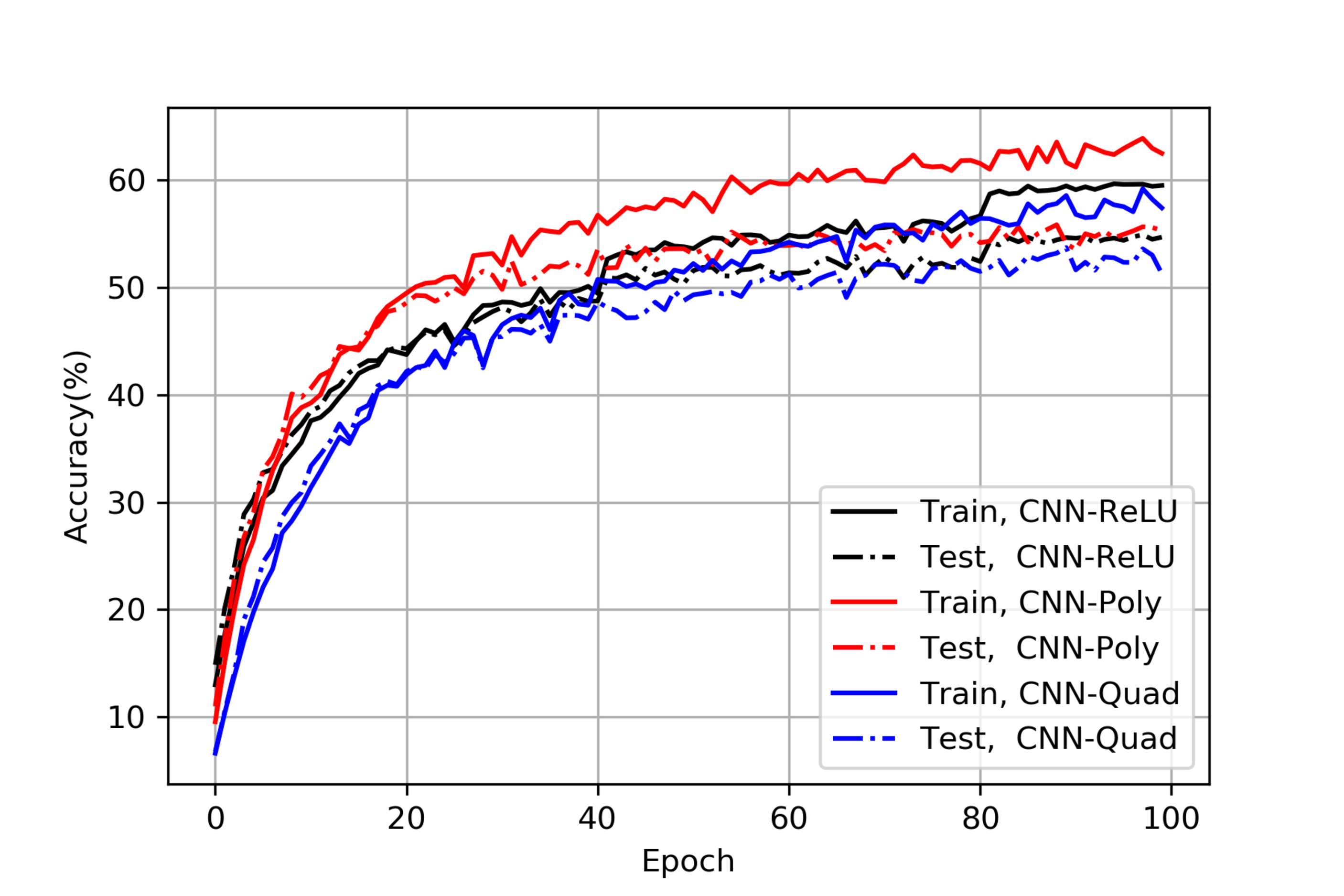}
        \caption{\footnotesize Accuracy of the CNN architecture in ~\cite{liu2017oblivious} on the  CIFAR-$100$ dataset.\label{fig:CNN_CIFAR100} }
        
    \end{minipage}
\end{figure*}

We first consider the convolutional neural network (CNN) architecture of \cite{liu2017oblivious}. This network has $7$ ReLU activation layers as described in Table \ref{CNN-Table}.
\begin{table}[htb!]
  \rowcolors{2}{gray!25}{white}
   \centering
  \begin{tabular}{cc}
    \rowcolor{gray!50}
    Input Size & Layer\\
    $32 \times 32$ &  Convolutional  $3 \times 3$, $64$, $/1$ \\
    $32 \times 32$ & Convolutional  $3 \times 3$, $64$, $/1$\\
    $32 \times 32$ & Mean-pooling   $2 \times 2$ \\
    $16 \times 16$ & Convolutional   $3 \times 3$, $64$, $/1$\\
    $16 \times 16$ & Convolutional   $3 \times 3$, $64$, $/1$\\
    $16 \times 16$ & Mean-pooling   $2 \times 2$ \\
    $16 \times 16$ & Convolutional  $3 \times 3$, $64$, $/1$\\
    $8 \times 8$   & Convolutional  $1 \times 1$, $64$, $/1$\\
    $8 \times 8$   & Convolutional   $1 \times 1$, $16$, $/1$\\
    $1024 \times 1$  & Fully Connected   \\ 
    $10$ or $100$  & Softmax\\
  \end{tabular}
  \caption{The CNN considered in \cite{liu2017oblivious} is shown. Each convolutional layer is followed by a ReLU activation layer. \label{CNN-Table}}
\end{table}

\noindent To investigate the performance of the various activation functions, we  implemented the following  schemes. 
\begin{enumerate}[noitemsep,topsep=0pt,parsep=1pt,partopsep=2pt,leftmargin=*]
\item \textbf{CNN-ReLU}. For a baseline performance, we implement the CNN using ReLU activations and max-pooling layers where all computations are carried out in the real domain. 
\item \textbf{CNN-Poly}. In this CNN, we use our proposed polynomial activation function $\sigma_{\mathrm{poly}}(x)=x^2+x$ and sum-pooling layers. The training is carried out in the real domain, while the inference is carried out in the finite field $\mathbb F_p$. 
\item \textbf{CNN-Quad}. In this CNN, we use the square function activation $\sigma_{\mathrm{square}}(x)=x^2$ and sum-pooling layers. The training is carried out also in the real domain and the inference is in the finite field $\mathbb F_p$. 
\end{enumerate}

\noindent In Fig.~\ref{fig:CNN_CIFAR10} and Fig.~\ref{fig:CNN_CIFAR100}, we compare between the different activation functions on  CIFAR-$10$ and CIFAR-$100$, respectively. As we can see, the accuracy of CNN-Poly using our polynomial function significantly outperforms the accuracy of CNN-Quad. Moreover, CNN-Poly has comparable accuracy to CNN-ReLU while CNN-Poly involves quantization errors to preserve the privacy and/or to allow verifiable inference. We summarize this comparison in Table \ref{CNN-Test-Table}.  
\begin{table}[htb!]
  \rowcolors{2}{gray!25}{white}
   \centering
  \begin{tabular}{ccc}
    \rowcolor{gray!50}
    Activation  & CIFAR-$10$ & CIFAR-$100$ \\
    CNN-ReLU & $84.6\%$  & $54.7\%$\\
    CNN-Poly & $83.0\%$ & $55.3\%$\\
    CNN-Quad & $77.4\%$ & $51.3\%$ \\
  \end{tabular}
  \caption{Test accuracy for the various activation functions for the CNN considered in \cite{liu2017oblivious}. \label{CNN-Test-Table}}
\end{table}

\subsection{Network In Network (NIN) \cite{lin2013network}}
To further investigate the performance of the various activation functions, we have considered the \enquote{Network In Network (NIN)} architecture \cite{lin2013network}. This network has $9$ ReLU activation layers as described in Table \ref{NIN-Table}.

\begin{table}[htb!]
  \rowcolors{2}{gray!25}{white}
   \centering
  \begin{tabular}{cc}
    \rowcolor{gray!50}
    Input Size & Layer\\
    $32 \times 32$ &  Convolutional  $5 \times 5$, $192$ \\
    $32 \times 32$ &  Convolutional   $1 \times 1$, $160$\\
    $32 \times 32$ &  Convolutional  $1 \times 1$, $96$\\
    $32 \times 32$ &  Max-pooling $3 \times 3$, /2\\
    $16 \times 16$ &  Dropout, $0.5$ \\
    $16 \times 16$ &  Convolutional   $5 \times 5$, $192$ \\
    $16 \times 16$ &  Convolutional   $1 \times 1$, $192$ \\
    $16 \times 16$ &  Convolutional   $1 \times 1$, $192$ \\
    $16 \times 16$ &  Average-pooling $3 \times 3$, $/2$ \\
    $8 \times 8$   &  Dropout, $0.5$ \\
    $8 \times 8$   &  Convolutional   $3 \times 3$, $192$ \\ 
    $8 \times 8$   & Convolutional  $1 \times 1$, $192$ \\
    $8 \times 8$   &  Convolutional  $1 \times 1$, $10$ \\
    $8 \times 8$   & Global Average-pooling $8 \times 8$, $/1$ \\
    $10$ or $100$ & Softmax
  \end{tabular}
  \caption{The Network In Network (NIN) architecture of \cite{lin2013network} is shown. Each convolutional layer is followed by a ReLU activation layer. \label{NIN-Table}}
  \vspace{-10pt}
\end{table} 

\noindent We have also implemented  the following three activation schemes, referred to as \emph{NIN-ReLU}, \emph{NIN-Poly}, and \emph{NIN-Quad}, respectively. For both CIFAR-$10$ and CIFAR-$100$ datasets, the accuracy of NIN-Poly significantly outperforms the accuracy of NIN-Quad as shown in Fig.~\ref{fig:NIN_CIFAR10} and Fig.~\ref{fig:NIN_CIFAR100}. We also summarize this comparison in Table \ref{NIN-Test-Table}.

\begin{figure*}[htb!]
    \begin{minipage}{0.47\textwidth}
        \centering
        \vspace{-5mm}\includegraphics[scale=0.14]{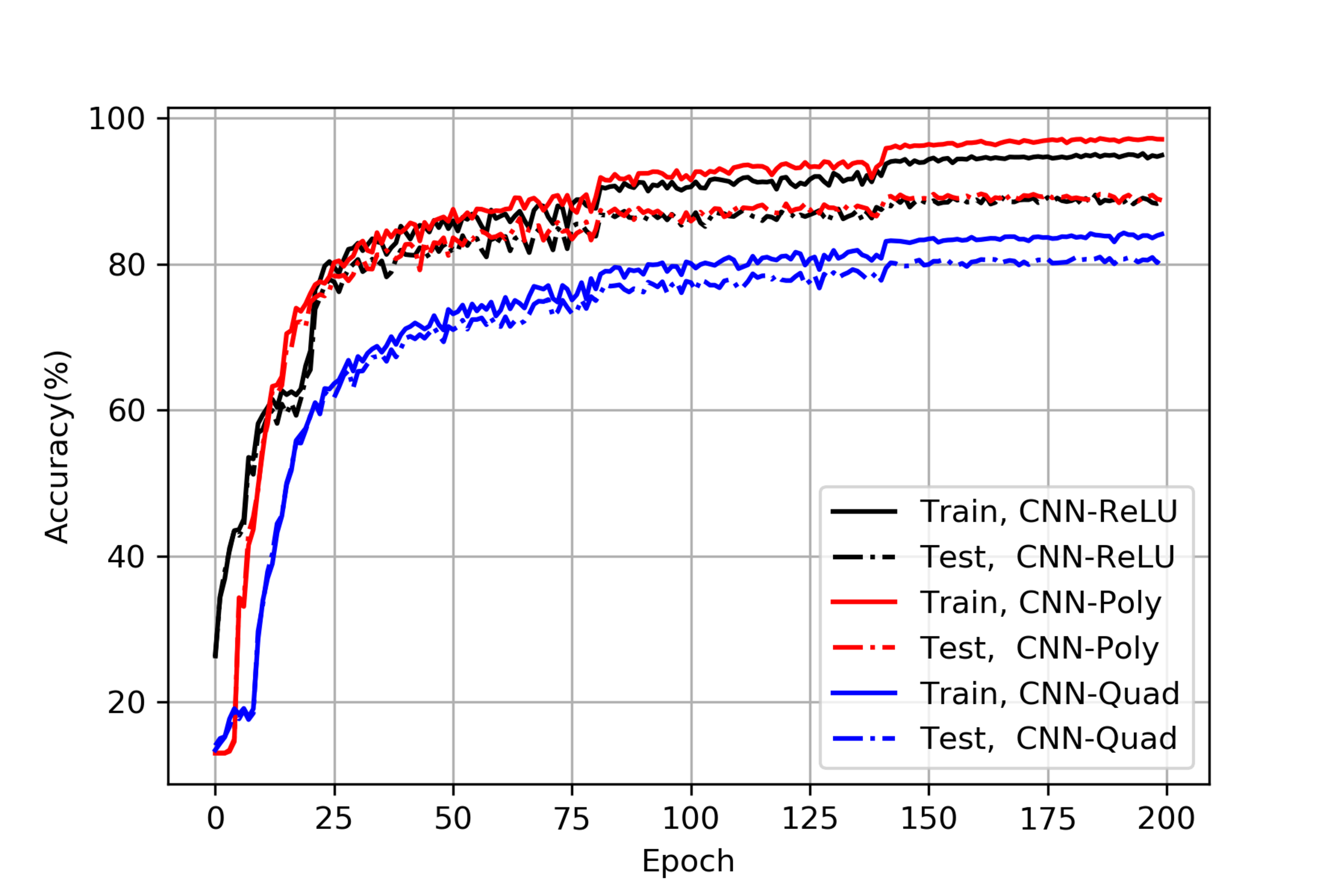}
        \caption{\footnotesize Accuracy of the NIN architecture in ~\cite{lin2013network} on the CIFAR-$10$ dataset. \label{fig:NIN_CIFAR10}}
    \end{minipage}\hfill
    \begin{minipage}{0.47\textwidth}
        \centering
        \vspace{-5mm}\includegraphics[scale=0.14]{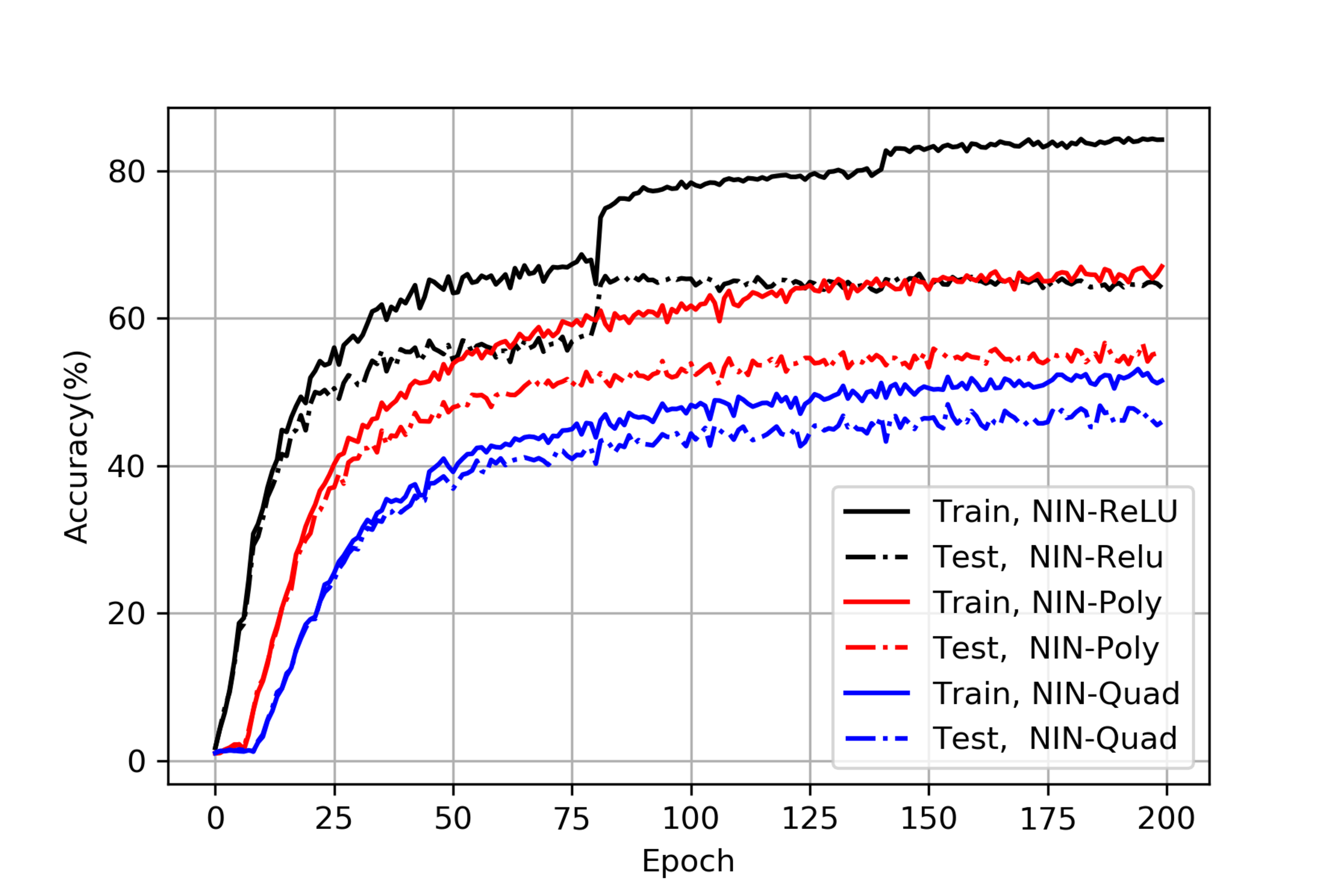}
        \caption{\footnotesize Accuracy of the NIN architecture in ~\cite{lin2013network} on the CIFAR-$100$ dataset. \label{fig:NIN_CIFAR100}}
    \end{minipage}
\end{figure*}

{\color{black}
\subsection{LeNet \cite{lecun1998mnist} on Tiny-ImageNet-$10$ Dataset}

To investigate the performance of the various activation functions with higher resolution images, we implemented CNN on the Tiny-ImageNet-10 dataset where each image consists of $64\times64$ pixels with $3$ RGB channels.
We implement the LeNet in \cite{lecun1998mnist}, and modify the size of fully connected layers in order to accommodate the differences between MNIST and Tiny-ImageNet images.
This network has $3$ ReLU activation layers as described in Table \ref{LeNet-Table}.

\begin{figure*}
\begin{minipage}[t!]{0.45\textwidth}
    \rowcolors{2}{gray!25}{white}
    \centering
    \begin{tabular}{ccc}
    \rowcolor{gray!50}
    Activation  & CIFAR-$10$ & CIFAR-$100$ \\
    NIN-ReLU & $88.5\%$ & $64.2\%$ \\
    NIN-Poly & $88.7\%$ & $55.4\%$ \\
    NIN-Quad & $81.0\%$ & $46.0\%$ \\
    \end{tabular}
    \caption{Test accuracy for the various activation functions for the NIN architecture \cite{lin2013network}. \label{NIN-Test-Table}}
\end{minipage}
\hfill
\begin{minipage}[t!]{0.45\textwidth}
    \rowcolors{2}{gray!25}{white}
    \centering
    \begin{tabular}{cc}
    \rowcolor{gray!50}
    Activation  & Tiny-ImageNet-$10$  \\
    LeNet-ReLU &  $56.2\%$ \\
    LeNet-Poly &  $55.8\%$ \\
    LeNet-Quad &  $50.0\%$ \\
    \end{tabular}
    \caption{Test accuracy for the various activation functions for the LeNet architecture \cite{lecun1998mnist}. \label{LeNet-Test-Table}}
\end{minipage}
\end{figure*}

\begin{table}[t!]
  \rowcolors{2}{gray!25}{white}
   \centering
  \begin{tabular}{cc}
    \rowcolor{gray!50}
    Input Size & Layer\\
    $64 \times 64$ &  Convolutional  $5 \times 5$, $10$ \\
    $60 \times 60$ &  Max-pooling $2 \times 2$\\
    $26 \times 26$ &  Convolutional  $5 \times 5$, $20$ \\
    $26 \times 26$ &  Max-pooling $2 \times 2$\\
    $26 \times 26$ &  Dropout, $0.5$ \\
    $3380 \times 1$   &  Fully Connected \\ 
    $512 \times 1$   &  Fully Connected \\
    $64 \times 1$   &  Fully Connected \\
    $10$           & Softmax
  \end{tabular}
  \caption{The LeNet architecture of \cite{lecun1998mnist} is shown. Each convolutional layer and fully connected layer (except for the last fully connected layer) is followed by a ReLU activation layer. \label{LeNet-Table}}
\end{table} 

\noindent We implemented LeNet with the three activation schemes, referred to as \emph{LeNet-ReLU}, \emph{LeNet-Poly}, and \emph{LeNet-Quad}, respectively. We also observe that the LeNet-Poly significantly outperforms LeNet-Quad as shown in Fig.~\ref{fig:LeNet_Tiny10}. We also summarize this comparison in Table \ref{LeNet-Test-Table}.
\begin{figure*}[htb!]
    \begin{minipage}{0.47\textwidth}
        \centering
        \vspace{-3mm}\includegraphics[scale=0.42]{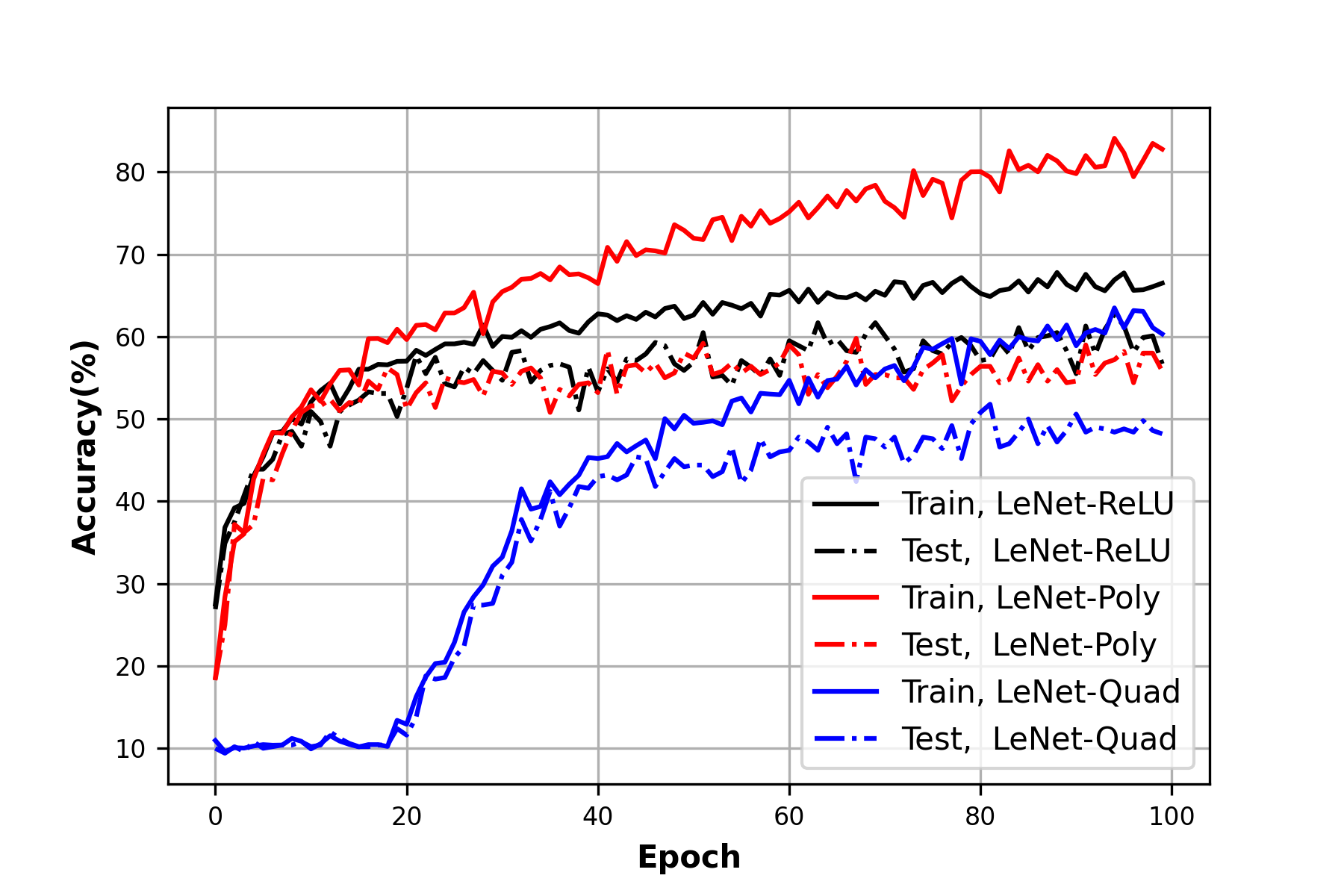}
        \caption{\footnotesize Accuracy of the LeNet architecture in ~\cite{lecun1998mnist} on the Tiny-ImageNet-$10$ dataset. \label{fig:LeNet_Tiny10}}
    \end{minipage}\hfill
    \begin{minipage}{0.475\textwidth}
        \centering
        \vspace{-3mm}\includegraphics[scale=0.42]{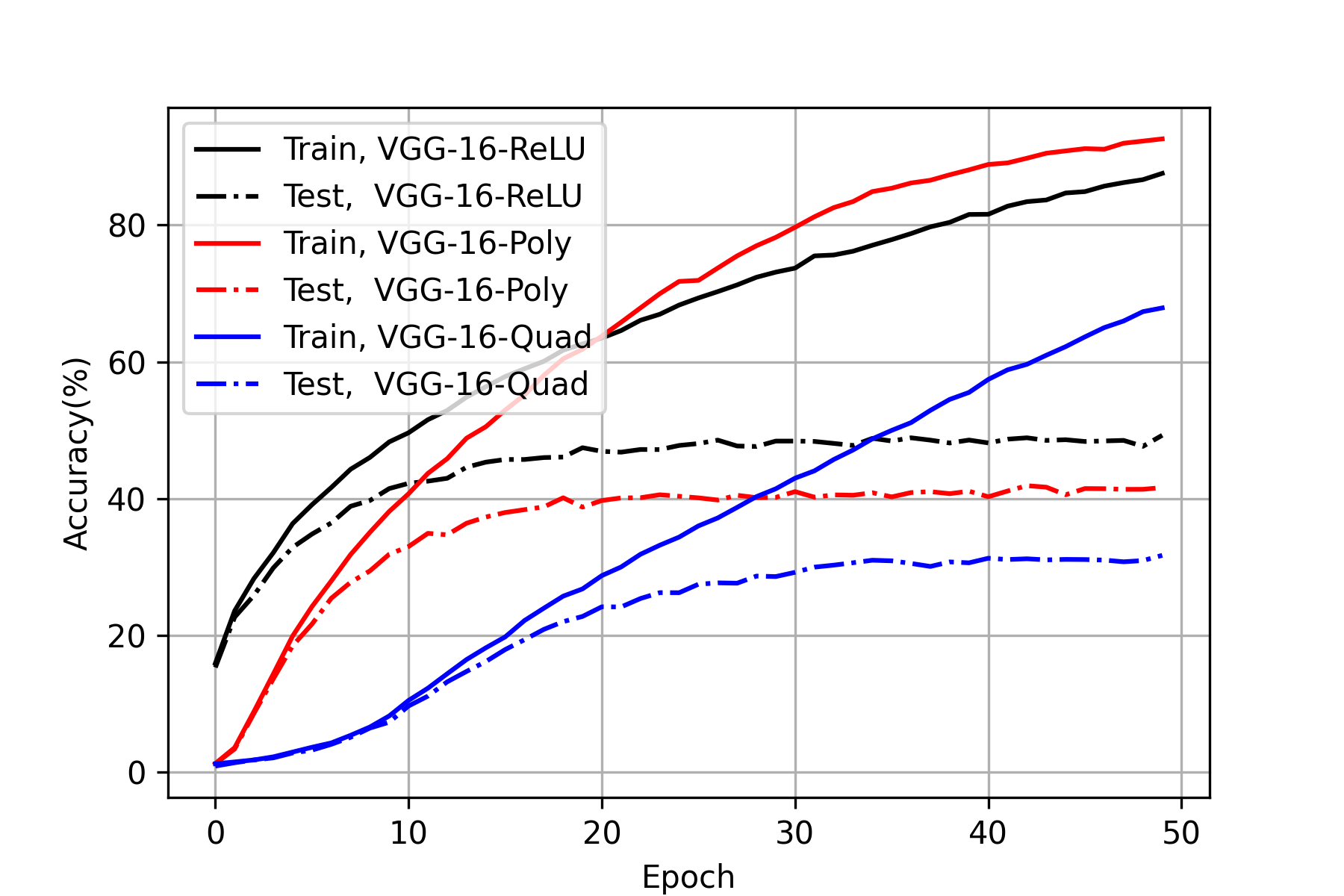}
        \caption{\footnotesize Accuracy of the VGG$16$ architecture in ~\cite{simonyan2014very} on the Tiny-ImageNet-$200$ dataset. \label{fig:VGG_Tiny200}}
    \end{minipage}
    \vspace{-6mm}
\end{figure*}


\subsection{VGG \cite{simonyan2014very} on Tiny-ImageNet-$200$ Dataset}

To study the performance of the various activation functions with more challenging image classification task, we implemented VGG-$16$~\cite{simonyan2014very} on the Tiny-ImageNet-$200$ dataset. 
This network has $9$ ReLU activation layers as described in Table \ref{VGG-Table}.

\begin{table}[htb!]
  \rowcolors{2}{gray!25}{white}
   \centering
  \begin{tabular}{cc}
    \rowcolor{gray!50}
    Input Size & Layer\\
    $64 \times 64$ &  Convolutional  $3 \times 3$, $64$ \\
    $64 \times 64$ &  Convolutional  $3 \times 3$, $64$\\
    $64 \times 64$ &  Max-pooling $2 \times 2$, $/2$\\
    
    $32 \times 32$ &  Convolutional  $3 \times 3$, $128$ \\
    $32 \times 32$ &  Convolutional  $3 \times 3$, $128$\\
    $32 \times 32$ &  Max-pooling $2 \times 2$, $/2$\\
    
    $16 \times 16$ &  Convolutional  $3 \times 3$, $256$ \\
    $16 \times 16$ &  Convolutional  $3 \times 3$, $256$\\
    $16 \times 16$ &  Convolutional  $3 \times 3$, $256$\\
    $16 \times 16$ &  Max-pooling $2 \times 2$, $/2$\\
    
    $8 \times 8$ &  Convolutional  $3 \times 3$, $512$ \\
    $8 \times 8$ &  Convolutional  $3 \times 3$, $512$ \\
    $8 \times 8$ &  Convolutional  $3 \times 3$, $512$ \\
    $8 \times 8$ &  Max-pooling $2 \times 2$, $/2$\\
    
    $4 \times 4$ & Average-pooling $1 \times 1$, $/1$ \\
    
    $8192 \times 1$ & Fully Connected  \\
    
    $200$        & Softmax
  \end{tabular}
  \caption{The VGG-$16$ architecture of \cite{simonyan2014very} is shown. Each convolutional layer is followed by a batch normalization and a ReLU activation layer. \label{VGG-Table}}
  \vspace{-0.5cm}
\end{table} 

\noindent We also implemented VGG-16 with the three activation functions, referred to as \emph{VGG-$16$-ReLU}, \emph{VGG-$16$-Poly}, and \emph{VGG-$16$-Quad}, respectively. 
We can see that VGG-$16$-Poly significantly outperforms VGG-$16$-Quad as shown in Fig.~\ref{fig:VGG_Tiny200}. This comparison is summarized in Table \ref{VGG-Test-Table}.

\begin{table}[htb!]
  \rowcolors{2}{gray!25}{white}
   \centering
  \begin{tabular}{cc}
    \rowcolor{gray!50}
    Activation  & Tiny-ImageNet-$200$  \\
    VGG-ReLU &  $49.3\%$ \\
    VGG-Poly &  $41.6\%$ \\
    VGG-Quad &  $31.2\%$ \\
  \end{tabular}
  \caption{Test accuracy for the various activation functions for the VGG-$16$ architecture \cite{simonyan2014very}. \label{VGG-Test-Table}}
  \vspace{-0.5cm}
\end{table}
}

\noindent Next, we discuss the hyperparameters. \\
{\color{black}
\noindent {\bf Hyperparameters.} For a fair comparison between three activation functions, we find the best learning rate from $\{0.1, 0.03, 0.01, 0.003, 0.001, 0.0003, 0.0001\}$ for each scheme. Given the choice of the best learning rate $\eta$, $\eta$ is decayed to $0.4\eta$ every $80$ and $140$ rounds in the NIN architecture while $\eta$ is not decayed in the CNN, LeNet and VGG-$16$ architectures.\\
We set the mini batch-size to $125$ for both CIFAR-$10$ and CIFAR-$100$ datasets, and $100$ for Tiny-Imagenet-$10$ and Tiny-Imagenet-$200$ datasets.
We use $L_2$ regularization parameter $\lambda=5\cdot10^{-4}$ for the CNN architecture, and use $\lambda=3\cdot10^{-4}$ for the NIN, LeNet and VGG-$16$ architectures.
}

We also report an additional experiment on AlexNet \cite{krizhevsky2012imagenet} in Appendix \ref{app:experiments}. Finally, we discuss some important remarks. 
\begin{remark}\normalfont (More Complicated Networks). 
 \label{remark:complicated-networks}
 Our goal in this work is not to achieve or outperform the state-of-the-art results on the CIFAR and the Tiny-ImageNet datasets.  Instead, we show that the square  function is not good enough to replace the ReLU function and our polynomial activation improves the accuracy significantly. In fact, prior works as \cite{liu2017oblivious,ghodsi2017safetynets,hesamifard2017cryptodl} performed experiments on even simpler architectures compared to our work. The main challenges in performing more experiments on complicated architectures while using polynomial activation functions is  the finite field size and the gradient explosion problem  \cite{ghodsi2017safetynets,ghodsi2021secure}. 
\end{remark}
\color{black}
\begin{remark}\normalfont(Degree-$2$ Polynomials). 
\label{remark:degree2}
We have focused on degree-$2$ polynomials to have an inference scheme of low complexity,  to keep the field size small as possible and for a fair comparison with the square activation function. 
\end{remark}
\begin{remark}\normalfont(Other Activation Functions). 
Similar to many prior works as \cite{ghodsi2017safetynets}, there is no need to approximate the Softmax activation function in the last layer with a polynomial function as it can be applied at the client-side. Nevertheless, approximating the other activation functions through polynomials with integer coefficients is an interesting direction, but it may not be possible in the uniform sense as these activation functions may not satisfy the conditions of Lemma \ref{lemma:conditions}. We leave this as a future work and we focus on ReLU function as the prior works \cite{gilad2016cryptonets,ghodsi2017safetynets,telgarsky2017neural}. 
\end{remark}
\color{black}
\section{Conclusions}
\label{Section:Discussion}
In this work, we have considered the problem of designing polynomial activation functions with integer coefficients or over finite field for privacy-preserving and verifiable inference for ReLU networks. While most prior works replace the ReLU activation function with the square activation function $\sigma_
{\mathrm{square}}(x)=x^2$, we have empirically shown that the square function can result in a severe degradation in the accuracy. Indeed, we have empirically shown that the square activation function is not the best function to replace the ReLU function even if the coefficients are restricted to be integers. In particular, we have proposed the activation function $\sigma_{\mathrm{poly}}(x)=x^2+x$ and empirically shown that it significantly outperforms the square function by up to $10.4\%$ improvement in the test accuracy through several experiments on the CIFAR and Tiny ImageNet datasets for several network architectures.

\bibliographystyle{plain}
\bibliography{ref.bib}

\appendix

\onecolumn

\section{Proof of Theorem \ref{thm:approximation}}
\label{appendix:proof}
In this appendix, we provide the proof of Theorem \ref{thm:approximation}.
\begin{proof}
We check the two conditions of Lemma \ref{lemma:conditions} for the ReLU function. 
\begin{itemize}
\item The first condition is satisfied as $\sigma_{\mathrm r}(-1)=0, \sigma_{\mathrm r}(0)=0$ and $\sigma_{\mathrm r}(1)=1$. 
\item The second condition, however, is not satisfied as $\sigma_{\mathrm r}(-1)=0$ has even parity and $\sigma_r(1)=1$ has odd parity. 
\end{itemize}
Hence, we conclude that the ReLU function is not uniformly approximable by polynomials with integer coefficients on $I=[-1,1]$.

Next, we show that the scaled ReLU function $\sigma_{\mathrm{sr}}(x;c)=\max(cx, 0)$ satisfies both conditions of Lemma \ref{lemma:conditions}.
\begin{itemize}
\item The first condition is satisfied as $\sigma_{\mathrm{sr}}(-1;c)=0$, $\sigma_{\mathrm{sr}}(0;c)=0$ and $\sigma_{\mathrm{sr}}(1;c)=c$.
\item The second condition is also satisfied as $\sigma_{\mathrm{sr}}(-1;c)=0$ and $\sigma_{\mathrm{sr}}(1;c)=c$ both have the same parity as $c$ is even. 
\end{itemize}
Thus, it is uniformly approximable by polynomials with integer coefficients on the interval $I=[-1, 1]$. 

Finally, it is straightforward to see that the degree-$2$ interpolating polynomial denoted by $q(x)$ is given by 
\begin{align}
    q(x) &=\left( \frac{\sigma_{\mathrm{sr}}(1;c)+\sigma_{\mathrm{sr}}(-1;c)-2\sigma_{\mathrm{sr}}(0;c)}{2} \right) x^2 +\left( \frac{\sigma_{\mathrm{sr}}(1;c)-\sigma_{\mathrm{sr}}(-1;c)}{2}\right)x +\sigma_{\mathrm{sr}}(0;c) \\
    &=\frac{c}{2} \left( x^2+x \right). \notag 
\end{align}
\end{proof}

\section{Additional Experiments}
\label{app:experiments}
In this Appendix, we evaluate the various activation functions on the AlexNet architecture \cite{krizhevsky2012imagenet} and the CIFAR-$10$ dataset. 

\noindent {\bf Hyperparameters.} For a fair comparison between three activation functions, we find the best learning rate from $\{0.1, 0.03, 0.01, 0.003, 0.001, 0.0003, 0.0001\}$ for each scheme. Given the choice of the best learning rate $\eta$, $\eta$ is decayed to $0.4\eta$ every $80$ and $140$ rounds. We set the mini batch-size to $250$ and we use an $L_2$ regularization parameter $\lambda=5\cdot10^{-4}$.
\begin{table}[htb!]
	\rowcolors{2}{gray!25}{white}
	\centering
	\begin{tabular}{cc}
		\rowcolor{gray!50}
		Activation  & CIFAR-$10$  \\
		AlexNet-ReLU & $75.8\%$  \\
		AlexNet-Poly & $77.5\%$  \\
	\end{tabular}
	\caption{Test accuracy for the various activation functions for the AlexNet~\cite{krizhevsky2012imagenet}. \label{AlexNet-Test-Table}}
\end{table}

\begin{figure}[htb!]
    \centering
    \vspace{0mm}\includegraphics[scale=0.18]{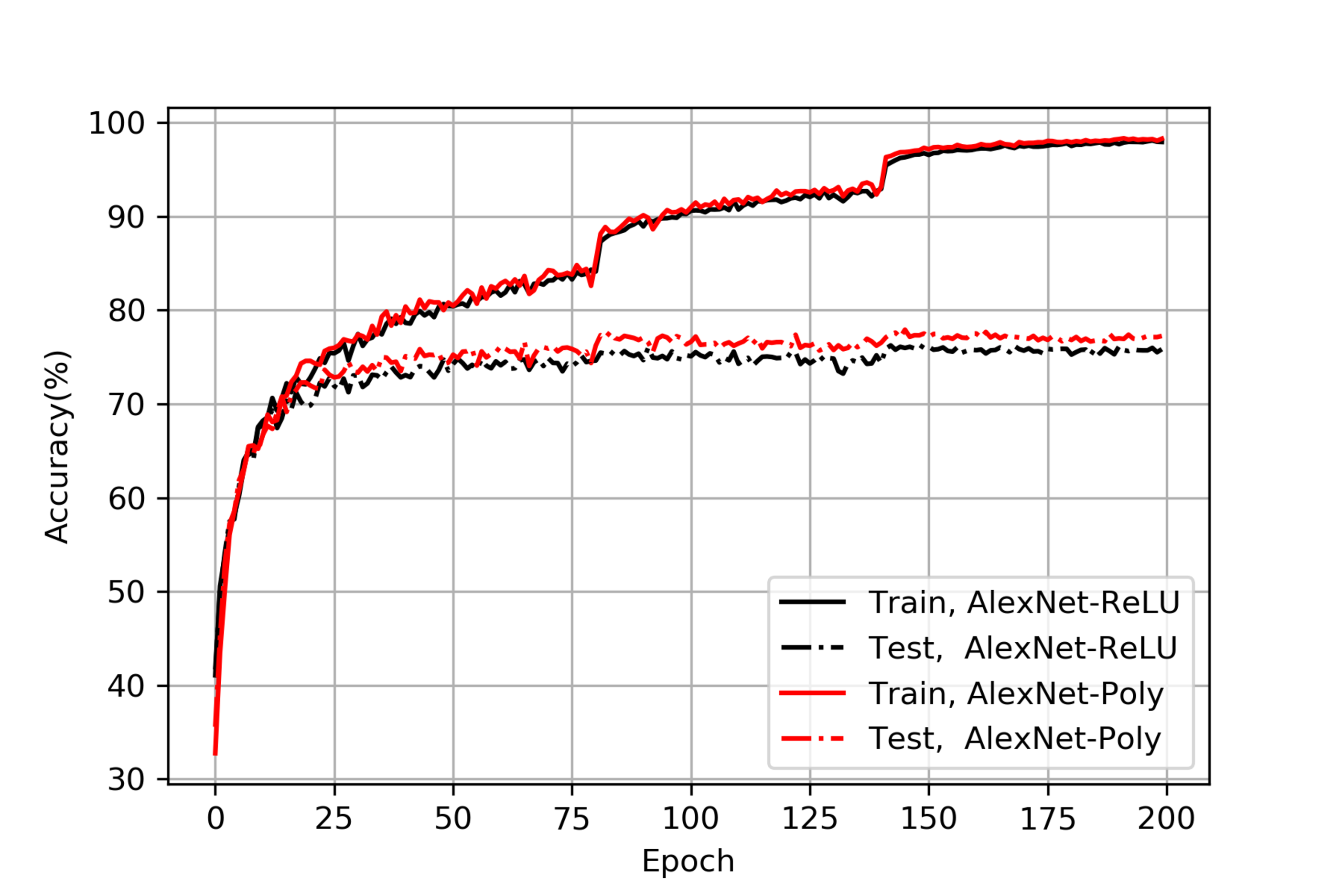}
    \vspace{0mm}\caption{Accuracy of the AlexNet in~\cite{krizhevsky2012imagenet} on the CIFAR-$10$ dataset. \label{fig:AlexNet_CIFAR10}}
\end{figure}
Similar to our experiments in Section \ref{Section:Evaluation}, we have implemented the following three schemes. 
\begin{enumerate}
\item \textbf{AlexNet-ReLU}. As a baseline, we have implemented  AlexNet using ReLU activations and max-pooling layers where all computations are carried out in the real domain. 
\item \textbf{AlexNet-Poly}. In this network, we use our proposed polynomial activation function $\sigma_{\mathrm{poly}}(x)=x^2+x$ and sum-pooling layers. The training is carried out in the real domain, while the inference is carried out over the finite field $\mathbb F_p$. 
\item \textbf{AlexNet-Quad}. In this network, we use the square function activation $\sigma_{\mathrm{square}}(x)=x^2$ and sum-pooling layers. Similar to AlexNet-Poly, the training of this network is carried out also in the real domain and the inference is carried out over the finite field $\mathbb F_p$. 
\end{enumerate}
In this experiment, however, we observed that AlexNet-Quad does not perform well and almost achieves a constant accuracy over all epochs. This has been also observed in some prior works as \cite{ghodsi2021secure}. However, our polynomial activation function works well in this network. Hence, we only compare between  AlexNet-ReLU and AlexNet-Poly in Fig. \ref{fig:AlexNet_CIFAR10} and Table \ref{AlexNet-Test-Table}. Finally, we would like to point out that AlexNet-ReLU is known to achieve  better test accuracy than the accuracy reported here with better fine tuning. Therefore, we do not claim that AlexNet-Poly is better than AlexNet-ReLU. We only claim that AlexNet-Poly is better than  AlexNet-Quad.

\end{document}